\newtheorem{prop}{Proposition}
\icmltitlerunning{Motion Equivariant Networks for Event Cameras with the Temporal Normalization Transform}
\begin{document}

\twocolumn[
\icmltitle{Motion Equivariant Networks for Event Cameras\\with the Temporal Normalization Transform}



\icmlsetsymbol{equal}{*}

\begin{icmlauthorlist}
\icmlauthor{Alex Zihao Zhu}{phila}
\icmlauthor{Ziyun Wang}{phila}
\icmlauthor{Kostas Daniilidis}{phila}
\end{icmlauthorlist}

\icmlaffiliation{phila}{University of Pennsylvania}

\icmlcorrespondingauthor{Alex Zihao Zhu}{alexzhu@seas.upenn.edu}

\icmlkeywords{Equivariance, Event Cameras}

\vskip 0.3in
]



\printAffiliationsAndNotice{}  

\begin{abstract}
In this work, we propose a novel transformation for events from an event camera that is equivariant to optical flow under convolutions in the 3-D spatiotemporal domain. Events are generated by changes in the image, which are typically due to motion, either of the camera or the scene. As a result, different motions result in a different set of events. For learning based tasks based on a static scene such as classification which directly use the events, we must either rely on the learning method to learn the underlying object distinct from the motion, or to memorize all possible motions for each object with extensive data augmentation. Instead, we propose a novel transformation of the input event data which normalizes the $x$ and $y$ positions by the timestamp of each event. We show that this transformation generates a representation of the events that is equivariant to this motion when the optical flow is constant, allowing a deep neural network to learn the classification task without the need for expensive data augmentation. We test our method on the event based N-MNIST dataset, as well as a novel dataset N-MOVING-MNIST, with significantly more variety in motion compared to the standard N-MNIST dataset. In all sequences, we demonstrate that our transformed network is able to achieve similar or better performance compared to a network with a standard volumetric event input, and performs significantly better when the test set has a larger set of motions than seen at training.
\end{abstract}
\begin{figure}
\centering
\includegraphics[width=0.4\linewidth]{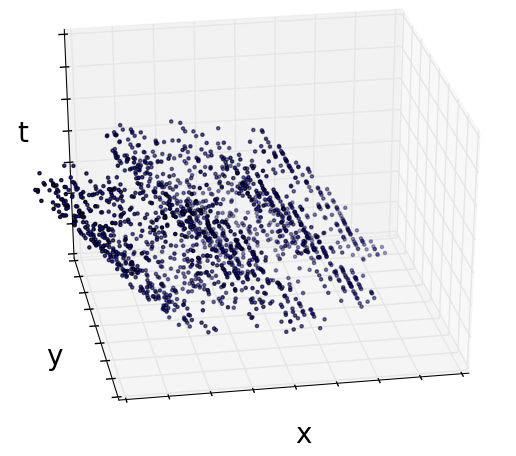}
\includegraphics[width=0.4\linewidth]{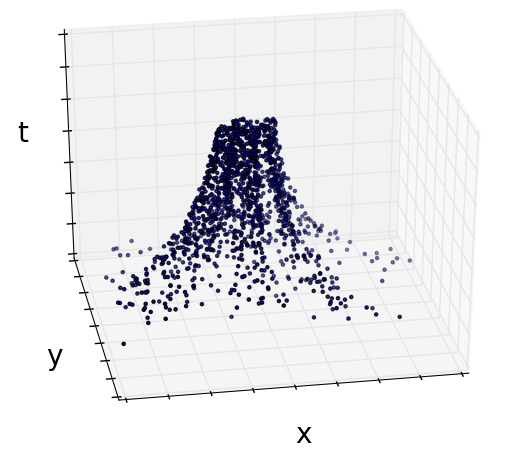}

\includegraphics[width=0.4\linewidth]{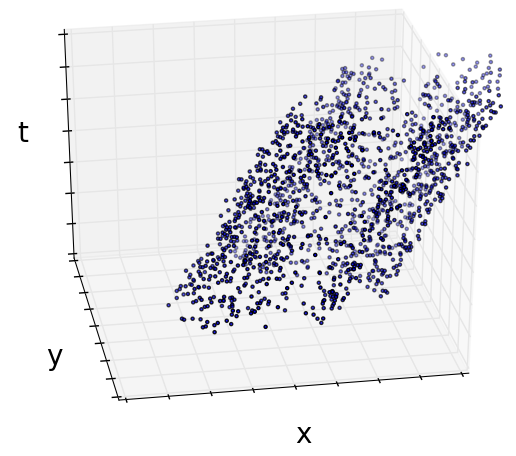}
\includegraphics[width=0.4\linewidth]{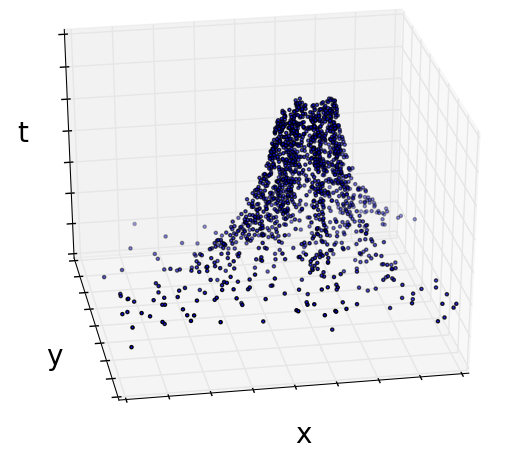}
\caption{Classical convolution layers would not be equivariant to event motions on the left, since they are shear deformations of the event volume. After transforming to canonical coordinates on the right, the volume translates uniformly, resulting in equivariance to the motion. Left: Raw input events. Right: Corresponding transformed events.}
\label{fig:transformation}
\end{figure}
\section{Introduction}
Event-based cameras are a novel asynchronous sensing modality that provides exciting benefits, such as the ability to track fast moving objects with no motion blur and low latency, high dynamic range, and low power consumption. These benefits provide a compelling reason to utilize these cameras in traditional vision tasks such as image classification, where they can operate in challenging conditions beyond the capability of traditional cameras. 

However, the data generated by these cameras, often represented as a stream of changes and their associated spatiotemporal positions, do not directly fit into the traditional paradigm for neural networks, which are designed to perform inference on 2D image frames. Recent works have tried to adapt events into this paradigm by performing convolutions over either compressed 2D representations of the events or discretized 3D volumes. However, due to the high temporal resolution of the events, this voxel grid will naturally embed the motion of the image, and so any given image has a near infinite number of possible 3D representations, depending on the motion of the camera and or scene. 

In this work, we propose a novel coordinate transformation for the 3D event data, which transforms the events into a space that is equivariant to motion for convolutions. In particular, changes in optical flow between sets of events are transformed into pure translations in the new space. Due to the equivariance of CNNs to translations, this transformation results in the CNN becoming equivariant to changes in the optical flow, such that a change in the flow corresponds to a predictable translation at each subsequent set of feature activations. This equivariance preserves the effect of changes in optical flow through each set of filters in the networks, and saves network capacity that would otherwise have been used to learn each individual motion.

Our full pipeline consists first of a landmark regressor, which uses a spatial transformer to estimate the position of a learned landmark in the image, in order to obtain translation invariance. The events are centered around this landmark, and then transformed with the temporal normalization transform to generate a motion equivariant representation, which can be passed into a standard CNN. We demonstrate that our method allows for significantly improved generalization of a classification network to motions, particularly in the case where the set of testing motions is independent to those in the training set. 

Our contributions can be summarized as:
\begin{itemize}
\item The temporal normalization transform (TNT) for events, which transforms events into a space that is equivariant to changes in optical flow for convolutions in a CNN.
\item A CNN architecture which combines a landmark regression network with the TNT to produce representations that are invariant to translation and equivariant to optical flow.
\item The N-MOVING-MNIST dataset, consisting of simulated event data over MNIST images, with many more (30) motion directions than past datasets.
\item Quantitative evaluations on both real and simulated event based datasets, including tests with few motions at training and many different motions at test time.
\end{itemize}
\section{Related Work}
Due to the high speed and dynamic range properties of event cameras, a number of works have attempted to represent the event stream in a form suitable for traditional CNNs for both classification and regression tasks. \cite{moeys2016steering, amir2017low, maqueda2018event, iacono2018towards} generate event histograms, consisting of the number of events at each pixel, and use these as images to classify the position of a robot, perform gesture recognition, estimate steering angle, and perform object classification, respectively. 

Several methods have also incorporated the event timestamps in the inputs. \cite{Zhu-RSS-18} represent the events with event histograms as well as the last timestamp at each pixel, to perform self-supervised optical flow estimation. Similarly, \cite{ye2018unsupervised} use the average timestamp at each pixel to perform unsupervised egomotion and depth estimation, and \cite{alonso2018ev} encode the events as a 6 channel image, consisting of positive and negative event histograms, timestamp means and standard deviations, in order to perform semantic segmentation. \cite{zhu2018unsupervised} introduced the discretized event volume, which discretizes the time dimension, and then inserts events using interpolation to perform unsupervised optical flow and egomotion and depth estimation. \cite{lagorce2017hots, sironi2018hats} propose the time surface, which encode the rate of events appearing at each pixel. 

In a different vein, \cite{wang2019space} treat the events as a point cloud, and use PointNet~\cite{qi2017pointnet} to process them, while \cite{sekikawa2018constant} propose a solution which learns a set of 2D convolution kernels with associated optical flow directions, which are used to deblur the events at each step of the convolution. 

However, these methods either compress the event information into the 2D space, or do not address the issue of equivariance to optical flow when representing events in 3D. However, most current event-based classification datasets are generated with only a limited subset of motions, many from a servo motor with a fixed trajectory. In addition, most datasets have the same motions in the training and test sequences. As a result, the issue of equivariance does not appear, as the network only has to memorize a small number of motions for each class.

Equivariance for CNNs is a well studied topic, and has roots in the study of Lie generators~\cite{ferraro1988relationship, segman1992canonical} and steerability~\cite{freeman1991design, simoncelli1992shiftable, teo1998design}. Recent works have extended these ideas for equivariance of CNNs to a number of transformations. \cite{cohen16_steer_cnns, jacobsen17_dynam_steer_block_deep_resid_networ} combine steerability with neural networks. Harmonic Networks~\citep{worrall16_harmon_networ} use the complex harmonics to generate filters that are equivariant to both rotation and translation. \cite{cohen2016group} propose group convolutions, which performs convolutions using a group operation rather than translation.  More recently, \cite{cohen2018spherical, esteves2018learning} propose spherical representations of a 3D input, which are processed with convolutions on SO3 and spherical convolutions, respectively. 

Similar to this work, Polar Transform Networks~\cite{esteves2018polar} convert an image into its log polar form to gain equivariance to rotation and scaling, while obtaining translation invariance through a spatial transformer network. We adopt a similar spatial transformer network to predict a landmark in each image, and apply the temporal normalization transform to obtain invariance to motion from optical flow.
\begin{figure*}[t]
\centering
\includegraphics[width=\linewidth]{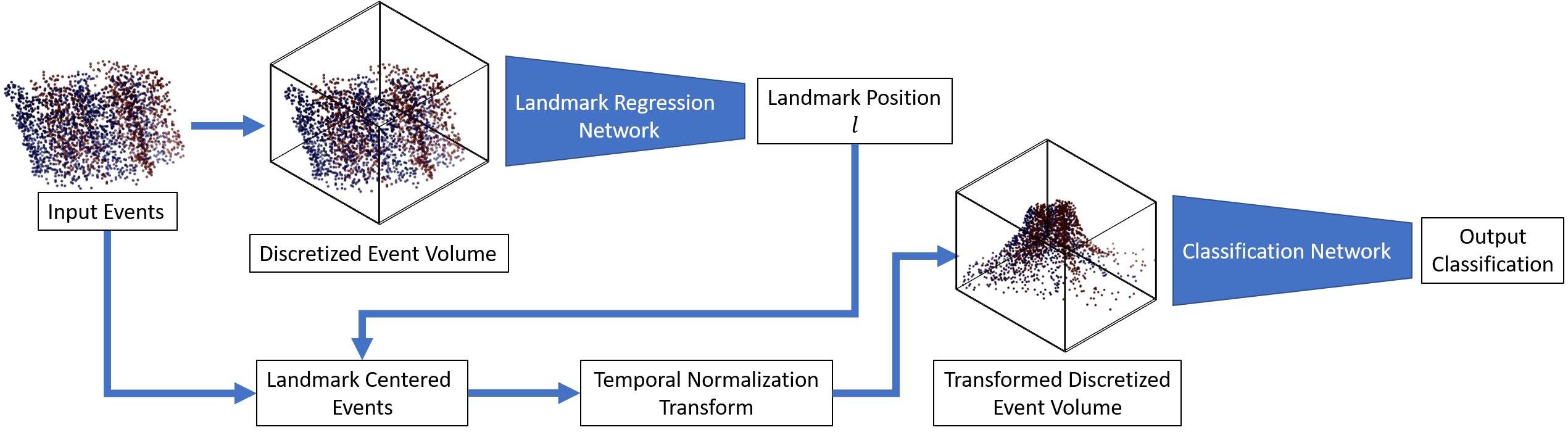}
\caption{Overview of the proposed pipeline. The input events are first converted into a discretized event volume, and passed through the landmark regression network to estimate the landmark position, $l$. This is used to center the events around $l$, on which the temporal normalization transform is applied. A second discretized event volume is generated on the transformed, centered events, and finally passed through the classification network to generate the final output classification.}
\label{fig:pipeline}
\end{figure*}
\section{Event Cameras}
\label{sec:event_cameras}
Event cameras are asynchronous cameras that trigger at changes in log image intensity. That is, when the log intensity over any pixel changes over a given threshold, $C$: $\log(I(x,y,t_1) - \log(I(x, y, t_0) > C$, the camera immediately sends an event, $e:=(x, y, t, p)$, consisting of the $x, y$ pixel position of the change, timestamp, $t$, of the change, accurate to tens of microseconds, and binary polarity, $p$, indicating whether the change was positive or negative. As events are triggered based on high changes in log intensity, they are predominantly generated on pixels with high image gradient.

In this work, we assume that the optical flow in a given spatiotemporal event window is constant. This assumption has seen success in a number of event camera works such as \cite{benosman2014event, zhu2017event}, as, due to the high temporal resolution of the events, it is usually possible to select a small enough spatiotemporal window such that this is the case. However, this assumption may not hold as strictly for classification type tasks of large or nonrigid objects, and we leave the exploration of this space to future work.

Consider a high gradient point in the image, $\mathbf{x}$ with pixel position $(x_0, y_0)$, at time $t_0=0$, moving with constant optical flow $\dot{\mathbf{x}}$. The position of each event, $e_i=(\mathbf{x}_i, t_i, p_i)$, generated at time $t_i$ can be computed as: $\mathbf{x}_i=\mathbf{x}_0+\dot{\mathbf{x}}t_i$. 
We model this with a transformation, $L_{OF}$, operating on the spatiotemporal x-y-t space.
\begin{align}
L_{OF}\begin{pmatrix}\mathbf{x}\\t\end{pmatrix}=&\begin{pmatrix}\mathbf{x}+\dot{\mathbf{x}}t\\t\end{pmatrix}
\end{align}
This is equivalent to a 3D shear deformation.
\section{Method}

\subsection{Optical Flow Equivariance for Events}
\label{sec:event_equivariance}
In this work, we model a set of events as a function, $E$, mapping from the spatiotemporal domain to polarities: 
\begin{align}
E :& (\mathbf{x}, t)\rightarrow \{-1, 0, 1\}\\
E(\mathbf{x}_i, t_i)=&\left\{\begin{array}{cc}p_i & \text{if an event was triggered at }(\mathbf{x}_i,t_i)\\ 0 & \text{otherwise} \end{array}\right.
\end{align}

\begin{prop}\label{prop:1}
The optical flow motion model $L_{OF}$ is not equivariant to 2D or 3D convolutions.
\end{prop}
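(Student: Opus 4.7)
The plan is to verify the equivariance condition $k \ast L_{OF} E = L_{OF}(k \ast E)$ directly and show it fails for every non-trivial kernel whenever $\dot{\mathbf{x}} \neq 0$. For the 3D case, I would first expand the left-hand side using the definition of convolution,
\begin{equation*}
(k \ast L_{OF} E)(\mathbf{x},t) = \int k(\mathbf{x}-\mathbf{x}', t-t')\, E(\mathbf{x}' - \dot{\mathbf{x}}t',\, t')\, d\mathbf{x}'\, dt',
\end{equation*}
and apply the change of variables $\mathbf{y} = \mathbf{x}' - \dot{\mathbf{x}}t'$ to put $E$ in its canonical form. Next I would expand the right-hand side by applying $L_{OF}$ to the convolution output:
\begin{equation*}
L_{OF}(k \ast E)(\mathbf{x},t) = \int k(\mathbf{x}-\dot{\mathbf{x}}t - \mathbf{y}, t-t')\, E(\mathbf{y}, t')\, d\mathbf{y}\, dt'.
\end{equation*}
Comparing the two integrands, the spatial argument of $k$ carries $\dot{\mathbf{x}}t'$ on one side versus $\dot{\mathbf{x}}t$ on the other, a discrepancy of $\dot{\mathbf{x}}(t-t')$.

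Since equivariance must hold for arbitrary $E$, the integrands themselves must agree, which forces $k(\mathbf{a} + \dot{\mathbf{x}}(t-t'), \tau) = k(\mathbf{a}, \tau)$ for all $t,t'$. This requires $k$ to be translation-invariant along the direction $\dot{\mathbf{x}}$, hence constant in that spatial direction — i.e., a trivial kernel. For any non-trivial $k$ and any $\dot{\mathbf{x}} \neq 0$ this gives a contradiction, establishing non-equivariance. If a concrete witness is desired, I would finish by exhibiting a single-point event and a spatially localized kernel and observing that the two sides produce responses that disagree off the time slice $t' = t$.

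For the 2D case, I would treat the temporal axis as input channels fed to a purely spatial convolution. Under $L_{OF}$, the $t_i$-th channel $E(\cdot, t_i)$ is translated by $\dot{\mathbf{x}}t_i$, a channel-dependent shift. A 2D convolution commutes only with a single spatial translation applied uniformly across all channels, so its output cannot reproduce the $t$-dependent family of shifts induced by $L_{OF}$; the same kernel-level obstruction then emerges after summing the per-channel convolutions. The main obstacle, in both cases, is not the algebra — which is a one-line change of variables — but rather pinning down the meaning of equivariance cleanly (especially in the 2D channel setting) so that the asymmetry between input time $t'$ and output time $t$ is manifest, and arguing from "must hold for all $E$" back to a pointwise condition on $k$ that excludes all non-trivial kernels.
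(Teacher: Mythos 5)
Your proof follows essentially the same route as the paper: expand the convolution of the transformed events, change variables to put $E$ in canonical form, and expose the mismatch between the shift $\dot{\mathbf{x}}t'$ (input time) appearing inside the integral and the shift $\dot{\mathbf{x}}t$ (output time) demanded on the right-hand side, with the 2D case handled by observing that the time-dependent shear cannot be reproduced by a purely spatial convolution acting uniformly across time slices/channels. Your additional closing step --- arguing from ``must hold for all $E$'' to a pointwise condition $k(\mathbf{a}+\dot{\mathbf{x}}(t-t'),\cdot)=k(\mathbf{a},\cdot)$ that excludes every non-trivial (localized) kernel, or exhibiting a single-point-event witness --- is a sound tightening of the paper's final inference, which stops at noting that the two integrands differ.
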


\begin{proof}
For $L_{OF}$ to be equivariant to 2D or 3D convolutions, the following must be true:
\begin{align}
((L_{OF}E)\ast\phi)(\mathbf{x},t)=&L_{OF}(E\ast\phi)(\mathbf{x},t)
\end{align}
Expanding the LHS:\footnote{Here we use the equation for correlation instead of convolution. The proof holds true for both cases, but correlation is the standard form used in many deep learning frameworks.}
\begin{align}
((&L_{OF}E)\ast \phi)(\mathbf{x}, t)\nonumber\\
=&\int\limits_{\mathbf{\xi}\in\mathbb{R}^2, \tau\in\mathbb{R}}L_{OF}E(\mathbf{\xi}, \tau)\phi(\mathbf{\xi}-\mathbf{x}, \tau-t)d\mathbf{\xi}d\tau\\
=&\int\limits_{\mathbf{\xi}\in\mathbb{R}^2, \tau\in\mathbb{R}} E(\mathbf{\xi}+\dot{\mathbf{x}}\tau, \tau)\phi(\mathbf{\xi}-\mathbf{x}, \tau-t) d\mathbf{\xi}d\tau
\intertext{Applying the variable substitution: $\tau'=\tau$, $\mathbf{\xi}':=\mathbf{\xi}+\dot{\mathbf{x}}\tau'$.}
d\mathbf{\xi}&d\tau=d\mathbf{\xi}'d\tau'\\
((&L_{OF}E)\ast \phi)(\mathbf{x}, t)\nonumber\\
=&\int\limits_{\substack{\mathbf{\xi}'\in\mathbb{R}^2\\\tau'\in\mathbb{R}}}E(\mathbf{\xi}', \tau')\phi(\mathbf{\xi}'-(\mathbf{x}+\dot{\mathbf{x}}\tau'), \tau'-t)d\mathbf{\xi}'d\tau'
\end{align}
This is equivalent to the $\mathbf{x}$ term in the filter being shifted by $\dot{\mathbf{x}}\tau'$. However, as $\tau'$ is the integrand, it  will be summed over and does not appear in the output. Therefore, this operation is not equivalent to the RHS, where the output is transformed by $L_{OF}$.
\begin{align}
L&_{OF}((E\ast \phi))(\mathbf{x}, t)\nonumber\\
=&(E\ast\phi)(\mathbf{x}+\dot{\mathbf{x}}t, t)\\
=&\int\limits_{\substack{\mathbf{\xi}'\in\mathbb{R}^2\\\tau'\in\mathbb{R}}}E(\mathbf{\xi}', \tau')\phi(\mathbf{\xi}'-(\mathbf{x}+\dot{\mathbf{x}}t), \tau'-t)d\mathbf{\xi}'d\tau'
\end{align}
In the 2D case, equivariance is lost as the optical flow transformation is a 3D operation, which cannot be applied to the 2D output activations of the convolution.
\end{proof}
\subsection{The Temporal Normalization Transform}
We propose the temporal normalization transform, $\rho$, a novel coordinate transformation on the event coordinates which eliminates the dependence on the timestamps, $t$, resulting in an input that is equivariant to both 2D and 3D convolution. The transformation scales the pixel positions of the events by the reciprocal of the timestamps
\begin{align}
\rho : (\mathbf{x}, t)\rightarrow (\mathbf{x}_\rho, t_\rho)=\left(\frac{\mathbf{x}}{t}, t\right)
\end{align}

An example of this transformation can be found in Fig.~\ref{fig:transformation}, where a change in the optical flow direction transforms into a corresponding translation in the transformed space. Note that this transform produces extremely high values of $\mathbf{x}$ as $t\rightarrow 0$. We discuss this issue in \ref{sec:implementation}.

\begin{prop}\label{prop:2}
$\rho(L_{OF})$ is equivariant to both 2D and 3D convolutions. A change in optical flow in $L_{OF}$ is converted to a translation in $\rho(L_{OF})$.
\end{prop}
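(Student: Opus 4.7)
The plan is to verify that precomposing with $\rho$ converts the shear $L_{OF}$ into a pure translation in the transformed coordinates, and then to invoke the standard translation-equivariance of convolution that failed for the raw shear in Proposition~\ref{prop:1}.

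First I would compute the composition explicitly. Applying $\rho$ to the image of $L_{OF}$ yields
\begin{equation}
\rho \circ L_{OF}(\mathbf{x}, t) = \left(\frac{\mathbf{x} + \dot{\mathbf{x}} t}{t},\, t\right) = \left(\frac{\mathbf{x}}{t} + \dot{\mathbf{x}},\, t\right),
\end{equation}
so in the coordinate $(\mathbf{x}_\rho, t_\rho) = (\mathbf{x}/t, t)$ the action of $L_{OF}$ becomes the additive shift $\mathbf{x}_\rho \mapsto \mathbf{x}_\rho + \dot{\mathbf{x}}$, with $t_\rho$ fixed and, crucially, with the shift amount independent of $t_\rho$. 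This is precisely where the obstruction in Proposition~\ref{prop:1} disappears: there the spatial shift $\dot{\mathbf{x}}\tau$ carried the integration variable $\tau$ and was therefore smeared out by the integral, whereas here the shift is a constant.

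Second, I would repeat the integral manipulation from the proof of Proposition~\ref{prop:1}, but applied to $\rho E$ instead of $E$, using the substitution $\xi' = \xi + \dot{\mathbf{x}}$, $\tau' = \tau$. The Jacobian is unity, and the filter argument becomes $\phi(\xi' - (\mathbf{x} + \dot{\mathbf{x}}),\, \tau' - t)$, whose shift no longer depends on $\tau'$ and therefore survives integration. The resulting expression is $((\rho E) \ast \phi)(\mathbf{x} + \dot{\mathbf{x}}, t)$, which by the first step is exactly the image of $L_{OF}$ (in its post-$\rho$ translational form) applied to $(\rho E) \ast \phi$. This gives the desired 3D equivariance.

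Third, the 2D case is an immediate specialization: since the shift in the transformed coordinates lives only in $\mathbf{x}_\rho$ at fixed $t_\rho$, 2D spatial convolutions applied per time slice commute with it by the usual translation-equivariance of CNNs, so the 2D output is also translated by $\dot{\mathbf{x}}$. The main obstacle is essentially nil at the level of formal computation --- the proof mirrors that of Proposition~\ref{prop:1} almost verbatim, with the one substantive difference being the disappearance of the $\tau$-factor in the shift. The one place requiring care is cleanly tracking the pre- and post-transform coordinate systems so that it is unambiguous that $L_{OF}$ has been reduced to a translation before the convolution integral is set up. The singularity of $\rho$ as $t \to 0$ is an implementation concern orthogonal to this formal argument and is addressed separately in Section~\ref{sec:implementation}.
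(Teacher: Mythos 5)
Your proposal is correct and follows essentially the same route as the paper: you first observe that in the $\rho$-coordinates the shear $L_{OF}$ acts as the constant spatial shift $\mathbf{x}_\rho \mapsto \mathbf{x}_\rho + \dot{\mathbf{x}}$, and then you rerun the change-of-variables argument of Proposition~\ref{prop:1} with $\xi' = \xi_\rho + \dot{\mathbf{x}}$, $\tau' = \tau_\rho$, which is exactly the paper's computation, including the remark that the 2D case follows since the shift no longer involves the temporal variable.
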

\begin{proof}
\begin{align}
((L&_{OF}E(\rho))\ast \phi)(\mathbf{x}, t)\nonumber\\
=&\int\limits_{\substack{\mathbf{\xi}_\rho\in\mathbb{R}^2\\\tau_\rho\in\mathbb{R}}} E(\mathbf{\xi}_\rho+\dot{\mathbf{x}}, \tau_\rho)\phi(\xi_\rho-\mathbf{x}, \tau_\rho-t) d\mathbf{\xi}_\rho d\tau_\rho
\intertext{Applying the variable substitution: $\tau':=\tau_\rho, \xi':=\xi_\rho+\dot{\mathbf{x}}$.}
d\xi_\rho&d\tau_\rho=d\xi'd\tau'\\
((L&_{OF}E(\rho))\ast \phi)(\mathbf{x}, t)\nonumber\\
=&\int\limits_{\substack{\mathbf{\xi}'\in\mathbb{R}^2\\\tau'\in\mathbb{R}}} E\left(\mathbf{\xi}', \tau'\right)\phi(\xi'-(\mathbf{x}+\dot{x}), \tau'-t) d\mathbf{\xi}'d\tau'\\
=&L_{OF}(E(\rho)\ast \phi)(\mathbf{x}, t)
\end{align}
In other words, $\rho$ converts the optical flow transformation to pure translation in 2D, which is equivariant for convolutions. A similar proof can be written for the 2D case by substituting $\tau$ for $t$ in the above equation.
\end{proof}
\subsection{Landmark Regression}
While the proposed coordinate transformation allows for convolutions that are equivariant to motion from optical flow, convolutions in this new coordinate frame are no longer equivariant to translation. 

Following the work by~\cite{esteves2018polar}, we apply a Spatial Transformer Network~\cite{jaderberg2015spatial} to regress the position of a fixed landmark, $l$ in each set of events, in order for our network to gain translation invariance. However, while the point predicted in~\cite{esteves2018polar} is exactly the origin of the target, our network is agnostic to the exact location of the landmark on the target, as long as it is consistent amongst all targets of the class. 

The landmark regression network consists of three 3x3 convolution layers followed by a 1x1 convolution layer, generating a 2D heatmap. The centroid of this heatmap is then used to offset the events.

\begin{prop}\label{prop:3}
The motion scaling transformation $\rho$ is translation invariant to convolutions after centering all events around a common landmark on the object. The position of this center is arbitrary as long as it is consistent between objects.
\end{prop}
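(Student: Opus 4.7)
The plan is to reduce translation invariance after $\rho$ to translation invariance of the centered events. The key observation is that $\rho$ destroys translation equivariance because $\rho(\mathbf{x}+\mathbf{x}_T,t) = (\mathbf{x}+\mathbf{x}_T)/t$, which depends on $t$ and is therefore not a translation of $\rho(\mathbf{x},t)$. So one must kill the translation before applying $\rho$, by subtracting a landmark that moves with the object.

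First I would formalize the setup. Let $E$ be the event function for an object, and let $T_{\mathbf{x}_T}$ be the spatial translation operator on events, i.e.\ $(T_{\mathbf{x}_T}E)(\mathbf{x},t) = E(\mathbf{x}-\mathbf{x}_T,t)$. Assume the landmark regressor $\mathcal{L}$ is translation-equivariant, so $\mathcal{L}(T_{\mathbf{x}_T}E) = \mathcal{L}(E) + \mathbf{x}_T$; this is a standard property of a centroid read out from a convolutionally-predicted heatmap and is the one non-routine assumption of the proposition. Let $\tilde{E}(\mathbf{x},t) := E(\mathbf{x}+\mathcal{L}(E),t)$ denote the centered events, and let $\tilde{E}_\rho := \tilde{E}\circ\rho^{-1}$ denote their image under the temporal normalization transform.

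Next I would compute what happens to the whole pipeline under a translation $\mathbf{x}_T$ of the input. The centered translated events are
\begin{align}
\widetilde{T_{\mathbf{x}_T}E}(\mathbf{x},t) &= (T_{\mathbf{x}_T}E)\bigl(\mathbf{x}+\mathcal{L}(T_{\mathbf{x}_T}E),\,t\bigr)\\
&= E\bigl(\mathbf{x}+\mathcal{L}(E)+\mathbf{x}_T-\mathbf{x}_T,\,t\bigr) = \tilde{E}(\mathbf{x},t),
\end{align}
so the translation cancels exactly at the centering step. Applying $\rho$ and then convolving with any filter $\phi$ therefore yields the same output as for the untranslated input, which is exactly translation invariance of the full map $E \mapsto (\tilde{E}_\rho \ast \phi)$.

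Finally I would address the arbitrariness of the landmark. If a different choice $\mathcal{L}'$ is used, then for a fixed class $\mathcal{L}'(E) - \mathcal{L}(E)$ is a constant offset $\mathbf{c}$ (up to the object's intrinsic variation, which is absorbed into the learned representation), so the centered events differ by a pure 2D translation $\mathbf{c}$. Under $\rho$ this again produces a $t$-dependent warp, but since $\mathbf{c}$ is the same across all instances of the class, the network simply learns its class template with respect to that fixed warp; the invariance argument above goes through identically.

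The main obstacle is the landmark-equivariance assumption: the proposition is only meaningful if $\mathcal{L}$ truly commutes with translation, which holds for a fully-convolutional heatmap centroid but would fail for architectures with global pooling or fully-connected layers before the heatmap. I would state this assumption explicitly and note that the soft-argmax over a convolutional heatmap used in the paper's architecture satisfies it, so the rest of the argument is the short calculation above.
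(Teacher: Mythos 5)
Your proposal is correct and rests on the same key idea as the paper's proof: an accurate (translation-equivariant) landmark prediction $\mathbf{l}=\mathbf{c}+\mathbf{s}$ cancels the translation $\mathbf{s}$ at the centering step, leaving only the class-consistent offset $\mathbf{c}$, so the output cannot depend on $\mathbf{s}$. The paper reaches the same conclusion by carrying the residual $-\mathbf{c}/\tau$ term through the convolution integral (showing the result equals $\rho T_{-\mathbf{c}}L_{OF}(E(\rho)\ast\phi)$), whereas you observe the cancellation already at the event level before $\rho$ and the convolution, which is a slightly more economical presentation of the same argument; your explicit statement of the regressor-equivariance assumption matches the paper's ``given an accurate landmark regression network'' hypothesis.
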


\begin{proof}
Refer to Appendix~\ref{sec:prop3_proof}.
\end{proof}
\subsection{Input Representation}
However, standard convolutional neural network architectures perform discrete convolution, rather than the continuous integrals presented in Sec.~\ref{sec:event_equivariance}. In addition, the high temporal resolution of the event timestamps would require an extremely fine discretization along the temporal dimension to fully capture the full temporal resolution.

We address both issues by adopting the discretized event volume proposed by~\cite{zhu2018unsupervised}, which represents a set of events in a 3D volume, where events are inserted into the volume in a linearly weighted manner, as in linear interpolation.

For a set of $N$ events, $\{(x_i, y_i, t_i, p_i), i\in [1, N]\}$, the time dimension is discretized along $B$ bins. The timestamps of the events is then scaled to the range $[0, B-1]$, and the event volume is defined as:
\begin{align}
t^*_i =& (B-1)(t_i - t_1) / (t_{N} - t_1)\\
V(x,y,t)=&\sum_{i} p_i k_b(x-x_i)k_b(y-y_i)k_b(t-t^*_i)\\
k_b(a) =& \max(0, 1-|a|)
\end{align}
where $k_b(a)$ is the linear sampling kernel defined in \cite{jaderberg2015spatial}.

This representation gives us a fixed size discretization of the spatiotemporal domain, while largely retaining the full distribution of the events without rounding. In addition, due to the linearly weighted insertion of events, the mapping from events to volume is fully differentiable, which is crucial for gradient to flow from the transformed discretized event volume to the weights of the landmark regression network.
\section{Experiments}
\label{sec:experiments}

\subsection{Datasets}
Due to the relative novelty of event cameras, there does not exist the wealth of labeled data that exists for traditional images. However, several works have attempted to convert standard image-based datasets to the event space. The N-MNIST and N-CALTECH101 datasets~\cite{orchard2015converting} convert the standard MNIST handwritten digit and CALTECH 101 object classficiation datasets to events by recording the images on a screen with a moving event camera. Similarly, the CIFAR10-DVS dataset~\cite{li2017cifar10} generates events by recording moving images from the CIFAR10 dataset with an event camera. All of these datasets consist of a small (3-4) set of predefined motions, which are repeated in the train and test sets. As a result, a network is able to memorize each motion at training time, and achieve state of the art results~\cite{iyer2018neuromorphic}.

In addition, the N-CARS dataset~\cite{sironi2018hats} consists of a number of real recordings from a driving scene, with cropped events of both cars and backgrounds. However, many of these scenes do not fulfil our assumption of constant optical flow, and so we omit it from our evaluation.

\subsubsection{The N-MOVING-MNIST Dataset}
To combat the lack of optical flow diversity, we perform experiments by training a network on only a single motion from a dataset, and testing on all of the proposed motions. However, this is still limited to two additional motions for the N-MNIST dataset, for example. In addition, we have generated the N-MOVING-MNIST dataset, generated with the Event Camera Simulator~\cite{mueggler2017event}. For each digit in the MNIST test set, we simulate 30 sets of events, each with different optical flow directions (one every 12 degrees). Simulations are performd by placing a virtual camera parallel to the MNIST sample, and generating trajectories which are also parallel to the MNIST image plane, in order to generate constant optical flow values. The magnitude of the camera velocity is maintained constant through all simulations. We perform our experiments by training on the N-MNIST dataset and testing on N-MOVING-MNIST.
\begin{table*}[t]
\centering
\begin{tabular}{c|ccccc} 
train/test sets & all/all & 1/all & 1/train & all/sim & 1/sim \\\hline
Baseline & \textbf{0.991} & 0.437 & 0.442 & 0.396 & 0.207 \\
TNT & 0.981 & 0.468 & 0.464 & \textbf{0.592} & 0.318\\
TNT+regress & 0.981 & \textbf{0.485} & \textbf{0.481} & 0.566 & \textbf{0.324}
\end{tabular}
\caption{Results from experiments on N-MNIST and N-MOVING-MNIST. Experiments are denoted by train$\_$set/test$\_$set, with the following labels: 1 - trained on the first motion in the N-MNIST train set. all - training/testing on all 3 motions in the N-MNIST train/test set respectively. train - testing on all 3 motions in the N-MNIST train set. sim - testing on all motions in the N-MOVING-MNIST test set.}
\label{tab:results_table}
\end{table*}

\subsection{Network Architecture}
The goal of our experiments is to demonstrate the proposed method's ability to reduce the capacity of the network needed to memorize different motions. To achieve this, we perform our experiments with only a small CNN with limited capacity. The classification network takes as input the discretized event volume of the events after TNT, and consists of two convolution layers with relu activations, the first of which has stride 2, and followed by average pooling after each layer with stride 2. The activations are then passed through two fully connected layers, outputting 1024 feature channels and finally N output classes as a one-hot vector. 
\subsection{Implementation Details}
\label{sec:implementation}
All models are trained for 60,000 iterations with a batch size of 64, and saturated validation accuracy before stopping. When training with the landmark regressor, random translations are applied as data augmentation.

One issue for implementation is the tendency for the transformed coordinates to grow towards infinity as $t\rightarrow 0$. Due to the need to discretize the spatial dimension at each convolution layer, it is prohibitively expensive to try to encompass all transformed events when discretizing. Instead, we have chosen to omit any points that fall outside a predefined image boundary, $[W, H]$. For this work, we have kept the same transformed image size as the original input image. In addition, the number of events falling out of the image can be controlled by scaling the timestamps before applying the transform, as equivariance is maintained for any constant scaling of the timestamps. However, there is a tradeoff between minimizing the number of events leaving the image and minimizing the compression of events at the highest timestamps. Due to the discretization in the spatial domain, transformed events which are very close together will be placed in the same voxel of the volume. In practice, we found that scaling the timestamps to be between 0 and $B-1$, where $B=9$ is the number of bins, works well. With this scaling, only events in the first bin may be transformed out of the image. 
\subsection{Evaluation Details}
In all experiments, we compare the proposed method with a baseline network which takes as input a discretized event volume generated directly from the raw events. In addition, we perform an ablation study on the landmark regression network (TNT+regress) with a heuristic that centers events around the center of the image (TNT). Several experiments are performed with N-MNIST and N-MOVING-MNIST, where each network is trained on either all provided motions in N-MNIST or only the first motion, and then tested on either all training or test motions in N-MNIST or all motions in N-MOVING-MNIST. Each sequence in N-MNIST consists of three motions, and each individual motion is extracted by thresholding events by the timestamps at which each motion starts and stops.
\subsection{Results}
The results from all experiments can be found in Table~\ref{tab:results_table}. 

All models perform well when trained and tested on all motions. However, performance drops significantly when restricting the number of optical flow directions at training compared to those seen at test time. In particular, even when the training set is used at test time, but with more motions, the network still suffers from a significant drop in accuracy. In addition, test accuracy decreases as the differential in number of motions seen between training and test sets increases. This demonstrates the need for methods that are robust to motion, as it is difficult for any given training set to encompass all expected motions to be seen at test time in a natural scene. 

The proposed method outperforms the baseline in all experiments where the number of train motions are limited. In both data constrained experiments, the landmark regression network provides a boost in performance, although the transform network with a center heuristic outperforms the landmark regression network in the all/sim experiment.
\subsection{Discussion}
The sharp difference in classification accuracy between the all/all and 1/all tests highlight the factors ignored when testing on the same types of motion as seen at training, and demonstrate the need for methods that are robust to differences in motion, such as the proposed equivariant method. 

The landmark regression network only provides a modest boost in performance over the center heuristic, but this is likely due to the fact that, due to the nature of the datasets which are cropped around the object, the center of the image is usually close to being a good landmark for all targets. As it looks like the transform is relatively robust, one could explore in future work combining a region proposal network in a detection and classification scheme, where the center of the bounding box predicted by the RPN is used.

However, there are still a number of issues that could improve the accuracy of our method (beyond having a deeper network). 
Besides optical flow, there are several other problems that are exclusive to events for tasks such as object classification, stemming from the fact that event cameras capture changes in image intensity. One issue is that the polarity of each event is itself dependent on the direction of the motion. For example, reversing the direction of motion over an edge in the image would generate events of the opposite polarity. While it is possible to ignore the polarities altogether, the relative differences between polarities still provide useful information for classification (rising vs falling edges in the image, for example). Currently, we rely on the network to learn these differences automatically. Another issue is that edges parallel to the direction of motion do not trigger changes in the image, and so do not generate any events. The result is that different parts of the image may be missing from the events, depending on the direction of motion.

\section{Conclusions}
In this work, we have proposed a novel coordinate transformation for events which, when combined with centering from a learned landmark, generates a representation that is equivariant to optical flow and invariant to translation for CNNs. We demonstrate that this transformation improves CNN classification performance in data constrained regimes with limited motion directions. A key take away of this work is that the transformation of events due to motion must be taken into account in order to achieve optimal performance in a deep learning regime. In future work, we hope to relax the global optical flow assumption this constraint, by relaxing to a local transformation which only assumes local consistency in flow.
\section{Acknowledgements}
Thanks to Tobi Delbruck and the team at iniLabs and iniVation for providing and supporting the DAVIS-346b cameras. This work was supported in part by the Semiconductor Research Corporation (SRC)
and DARPA. We also gratefully appreciate support through the following grants: NSF-DGE-0966142 (IGERT), NSF-IIP-1439681 (I/UCRC), NSF-IIS-1426840, NSF-IIS-1703319, NSF MRI 1626008, ARL RCTA W911NF-10-2-0016, ONR N00014-17-1-2093, the Honda Research Institute and the DARPA FLA program.
\appendix
\section{Proof of Proposition~\ref{prop:3}}
\label{sec:prop3_proof}
Let $\mathbf{c}\in\mathbb{R}^2$ be the pixel position of the landmark position for a given set of events, and $\mathbf{s}\in\mathbb{R}^2$ be a translation of the events. Given an accurate landmark regression network, the predicted landmark position is $\mathbf{l}=\mathbf{c}+\mathbf{s}$. Let a translation of the events be represented by the function $T_\mathbf{k} : (\mathbf{x}, t)\rightarrow(\bar{\mathbf{x}}, \bar{t})=(\mathbf{x}+\mathbf{k}, t)$.
\begin{align}
((\rho& T_{\mathbf{s}-\mathbf{l}}L_{OF}E)\ast \phi)(\mathbf{x}, \tau)\nonumber\\
=&\int\limits_{\mathbf{\xi}\in\mathbb{R}^2, \tau\in\mathbb{R}} E\left(\frac{\mathbf{\xi}+\mathbf{s}-\mathbf{l}}{\tau}+\dot{\mathbf{x}}, \tau\right)\phi(\mathbf{\xi}-\mathbf{x}, \tau) d\mathbf{\xi}d\tau\\
=&\int\limits_{\mathbf{\xi}_\rho\in\mathbb{R}^2, \tau_\rho\in\mathbb{R}} E\left(\mathbf{\xi}_\rho-\frac{\mathbf{c}}{\tau_\rho}+\dot{\mathbf{x}}, \tau_\rho\right)\phi(\mathbf{\xi}_\rho-\mathbf{x}, \tau_\rho) d\mathbf{\xi}_\rho d\tau_\rho
\intertext{Applying the variable substitution: $\tau':=\tau$, $\xi':=\xi_\rho-\frac{\mathbf{c}}{\tau'}+\dot{\mathbf{x}}$.}
=&\int\limits_{\mathbf{\xi}'\in\mathbb{R}^2, \tau'\in\mathbb{R}} E(\xi', \tau')\phi\left(\mathbf{\xi}'-\left(\mathbf{x}-\frac{\mathbf{c}}{\tau'}+\dot{\mathbf{x}}\right), \tau'\right) d\mathbf{\xi}'d\tau'\\
=&\rho T_{-\mathbf{c}}L_{OF}(E(\rho)\ast \phi)(\mathbf{x}, \tau)
\end{align}
As this is true for any $\mathbf{s}$, the transformation is translation invariant. This is true for any common landmark $\mathbf{c}$.
\bibliography{refs}

\begin{thebibliography}{32}
\providecommand{\natexlab}[1]{#1}
\providecommand{\url}[1]{\texttt{#1}}
\expandafter\ifx\csname urlstyle\endcsname\relax
  \providecommand{\doi}[1]{doi: #1}\else
  \providecommand{\doi}{doi: \begingroup \urlstyle{rm}\Url}\fi

\bibitem[Alonso \& Murillo(2018)Alonso and Murillo]{alonso2018ev}
Alonso, I. and Murillo, A.~C.
\newblock Ev-segnet: Semantic segmentation for event-based cameras.
\newblock \emph{arXiv preprint arXiv:1811.12039}, 2018.

\bibitem[Amir et~al.()Amir, Taba, Berg, Melano, McKinstry, Di~Nolfo, Nayak,
  Andreopoulos, Garreau, Mendoza, et~al.]{amir2017low}
Amir, A., Taba, B., Berg, D., Melano, T., McKinstry, J., Di~Nolfo, C., Nayak,
  T., Andreopoulos, A., Garreau, G., Mendoza, M., et~al.
\newblock A low power, fully event-based gesture recognition system.

\bibitem[Benosman et~al.(2014)Benosman, Clercq, Lagorce, Ieng, and
  Bartolozzi]{benosman2014event}
Benosman, R., Clercq, C., Lagorce, X., Ieng, S.-H., and Bartolozzi, C.
\newblock Event-based visual flow.
\newblock \emph{IEEE Trans. Neural Netw. Learning Syst.}, 25\penalty0
  (2):\penalty0 407--417, 2014.

\bibitem[Cohen \& Welling(2016{\natexlab{a}})Cohen and
  Welling]{cohen16_steer_cnns}
Cohen, T.~S. and Welling, M.
\newblock Steerable cnns.
\newblock 2016{\natexlab{a}}.
\newblock URL \url{http://arxiv.org/abs/1612.08498v1}.

\bibitem[Cohen \& Welling(2016{\natexlab{b}})Cohen and Welling]{cohen2016group}
Cohen, T.~S. and Welling, M.
\newblock Group equivariant convolutional networks.
\newblock \emph{arXiv preprint arXiv:1602.07576}, 2016{\natexlab{b}}.

\bibitem[Cohen et~al.(2018)Cohen, Geiger, K{\"o}hler, and
  Welling]{cohen2018spherical}
Cohen, T.~S., Geiger, M., K{\"o}hler, J., and Welling, M.
\newblock Spherical cnns.
\newblock \emph{arXiv preprint arXiv:1801.10130}, 2018.

\bibitem[Esteves et~al.(2018{\natexlab{a}})Esteves, Allen-Blanchette, Makadia,
  and Daniilidis]{esteves2018learning}
Esteves, C., Allen-Blanchette, C., Makadia, A., and Daniilidis, K.
\newblock Learning so (3) equivariant representations with spherical cnns.
\newblock In \emph{European Conference on Computer Vision}, pp.\  54--70.
  Springer, 2018{\natexlab{a}}.

\bibitem[Esteves et~al.(2018{\natexlab{b}})Esteves, Allen-Blanchette, Zhou, and
  Daniilidis]{esteves2018polar}
Esteves, C., Allen-Blanchette, C., Zhou, X., and Daniilidis, K.
\newblock Polar transformer networks.
\newblock In \emph{International Conference on Learning Representations},
  2018{\natexlab{b}}.
\newblock URL \url{https://openreview.net/forum?id=HktRlUlAZ}.

\bibitem[Ferraro \& Caelli(1988)Ferraro and Caelli]{ferraro1988relationship}
Ferraro, M. and Caelli, T.~M.
\newblock Relationship between integral transform invariances and lie group
  theory.
\newblock \emph{JOSA A}, 5\penalty0 (5):\penalty0 738--742, 1988.

\bibitem[Freeman et~al.(1991)Freeman, Adelson, et~al.]{freeman1991design}
Freeman, W.~T., Adelson, E.~H., et~al.
\newblock The design and use of steerable filters.
\newblock \emph{IEEE Transactions on Pattern analysis and machine
  intelligence}, 13\penalty0 (9):\penalty0 891--906, 1991.

\bibitem[Iacono et~al.(2018)Iacono, Weber, Glover, and
  Bartolozzi]{iacono2018towards}
Iacono, M., Weber, S., Glover, A., and Bartolozzi, C.
\newblock Towards event-driven object detection with off-the-shelf deep
  learning.
\newblock In \emph{2018 IEEE/RSJ International Conference on Intelligent Robots
  and Systems (IROS)}, pp.\  1--9. IEEE, 2018.

\bibitem[Iyer et~al.(2018)Iyer, Chua, and Li]{iyer2018neuromorphic}
Iyer, L.~R., Chua, Y., and Li, H.
\newblock Is neuromorphic mnist neuromorphic? analyzing the discriminative
  power of neuromorphic datasets in the time domain.
\newblock \emph{arXiv preprint arXiv:1807.01013}, 2018.

\bibitem[Jacobsen et~al.(2017)Jacobsen, Brabandere, and
  Smeulders]{jacobsen17_dynam_steer_block_deep_resid_networ}
Jacobsen, J.-H., Brabandere, B.~d., and Smeulders, A. W.~M.
\newblock Dynamic steerable blocks in deep residual networks.
\newblock \emph{CoRR}, 2017.
\newblock URL \url{http://arxiv.org/abs/1706.00598v2}.

\bibitem[Jaderberg et~al.(2015)Jaderberg, Simonyan, Zisserman,
  et~al.]{jaderberg2015spatial}
Jaderberg, M., Simonyan, K., Zisserman, A., et~al.
\newblock Spatial transformer networks.
\newblock In \emph{Advances in Neural Information Processing Systems}, pp.\
  2017--2025, 2015.

\bibitem[Lagorce et~al.(2017)Lagorce, Orchard, Galluppi, Shi, and
  Benosman]{lagorce2017hots}
Lagorce, X., Orchard, G., Galluppi, F., Shi, B.~E., and Benosman, R.~B.
\newblock Hots: a hierarchy of event-based time-surfaces for pattern
  recognition.
\newblock \emph{IEEE transactions on pattern analysis and machine
  intelligence}, 39\penalty0 (7):\penalty0 1346--1359, 2017.

\bibitem[Li et~al.(2017)Li, Liu, Ji, Li, and Shi]{li2017cifar10}
Li, H., Liu, H., Ji, X., Li, G., and Shi, L.
\newblock Cifar10-dvs: an event-stream dataset for object classification.
\newblock \emph{Frontiers in neuroscience}, 11:\penalty0 309, 2017.

\bibitem[Maqueda et~al.(2018)Maqueda, Loquercio, Gallego, Garc{\'\i}a, and
  Scaramuzza]{maqueda2018event}
Maqueda, A.~I., Loquercio, A., Gallego, G., Garc{\'\i}a, N., and Scaramuzza, D.
\newblock Event-based vision meets deep learning on steering prediction for
  self-driving cars.
\newblock In \emph{Proceedings of the IEEE Conference on Computer Vision and
  Pattern Recognition}, pp.\  5419--5427, 2018.

\bibitem[Moeys et~al.(2016)Moeys, Corradi, Kerr, Vance, Das, Neil, Kerr, and
  Delbr{\"u}ck]{moeys2016steering}
Moeys, D.~P., Corradi, F., Kerr, E., Vance, P., Das, G., Neil, D., Kerr, D.,
  and Delbr{\"u}ck, T.
\newblock Steering a predator robot using a mixed frame/event-driven
  convolutional neural network.
\newblock In \emph{Event-based Control, Communication, and Signal Processing
  (EBCCSP), 2016 Second International Conference on}, pp.\  1--8. IEEE, 2016.

\bibitem[Mueggler et~al.(2017)Mueggler, Rebecq, Gallego, Delbruck, and
  Scaramuzza]{mueggler2017event}
Mueggler, E., Rebecq, H., Gallego, G., Delbruck, T., and Scaramuzza, D.
\newblock The event-camera dataset and simulator: Event-based data for pose
  estimation, visual odometry, and slam.
\newblock \emph{The International Journal of Robotics Research}, 36\penalty0
  (2):\penalty0 142--149, 2017.

\bibitem[Orchard et~al.(2015)Orchard, Jayawant, Cohen, and
  Thakor]{orchard2015converting}
Orchard, G., Jayawant, A., Cohen, G.~K., and Thakor, N.
\newblock Converting static image datasets to spiking neuromorphic datasets
  using saccades.
\newblock \emph{Frontiers in neuroscience}, 9:\penalty0 437, 2015.

\bibitem[Qi et~al.(2017)Qi, Su, Mo, and Guibas]{qi2017pointnet}
Qi, C.~R., Su, H., Mo, K., and Guibas, L.~J.
\newblock Pointnet: Deep learning on point sets for 3d classification and
  segmentation.
\newblock \emph{Proc. Computer Vision and Pattern Recognition (CVPR), IEEE},
  1\penalty0 (2):\penalty0 4, 2017.

\bibitem[Segman et~al.(1992)Segman, Rubinstein, and Zeevi]{segman1992canonical}
Segman, J., Rubinstein, J., and Zeevi, Y.~Y.
\newblock The canonical coordinates method for pattern deformation: Theoretical
  and computational considerations.
\newblock \emph{IEEE Transactions on Pattern Analysis and Machine
  Intelligence}, 14\penalty0 (12):\penalty0 1171--1183, 1992.

\bibitem[Sekikawa et~al.(2018)Sekikawa, Ishikawa, Hara, Yoshida, Suzuki, Sato,
  and Saito]{sekikawa2018constant}
Sekikawa, Y., Ishikawa, K., Hara, K., Yoshida, Y., Suzuki, K., Sato, I., and
  Saito, H.
\newblock Constant velocity 3d convolution.
\newblock In \emph{2018 International Conference on 3D Vision (3DV)}, pp.\
  343--351. IEEE, 2018.

\bibitem[Simoncelli et~al.(1992)Simoncelli, Freeman, Adelson, and
  Heeger]{simoncelli1992shiftable}
Simoncelli, E.~P., Freeman, W.~T., Adelson, E.~H., and Heeger, D.~J.
\newblock Shiftable multiscale transforms.
\newblock \emph{IEEE transactions on Information Theory}, 38\penalty0
  (2):\penalty0 587--607, 1992.

\bibitem[Sironi et~al.()Sironi, Brambilla, Bourdis, Lagorce, and
  Benosman]{sironi2018hats}
Sironi, A., Brambilla, M., Bourdis, N., Lagorce, X., and Benosman, R.
\newblock Hats: Histograms of averaged time surfaces for robust event-based
  object classification.

\bibitem[Teo \& Hel-Or(1998)Teo and Hel-Or]{teo1998design}
Teo, P.~C. and Hel-Or, Y.
\newblock Design of multi-parameter steerable functions using cascade basis
  reduction.
\newblock In \emph{Computer Vision, 1998. Sixth International Conference on},
  pp.\  187--192. IEEE, 1998.

\bibitem[Wang et~al.(2019)Wang, Zhang, Yuan, and Lu]{wang2019space}
Wang, Q., Zhang, Y., Yuan, J., and Lu, Y.
\newblock Space-time event clouds for gesture recognition: from rgb cameras to
  event cameras.
\newblock \emph{IEEE Winter Conference on Applications of Computer Vision},
  2019.

\bibitem[Worrall et~al.(2016)Worrall, Garbin, Turmukhambetov, and
  Brostow]{worrall16_harmon_networ}
Worrall, D.~E., Garbin, S.~J., Turmukhambetov, D., and Brostow, G.~J.
\newblock Harmonic networks: Deep translation and rotation equivariance.
\newblock \emph{arXiv preprint arXiv:1612.04642}, 2016.

\bibitem[Ye et~al.(2018)Ye, Mitrokhin, Parameshwara, Ferm{\"u}ller, Yorke, and
  Aloimonos]{ye2018unsupervised}
Ye, C., Mitrokhin, A., Parameshwara, C., Ferm{\"u}ller, C., Yorke, J.~A., and
  Aloimonos, Y.
\newblock Unsupervised learning of dense optical flow and depth from sparse
  event data.
\newblock \emph{arXiv preprint arXiv:1809.08625}, 2018.

\bibitem[Zhu et~al.(2018{\natexlab{a}})Zhu, Yuan, Chaney, and
  Daniilidis]{Zhu-RSS-18}
Zhu, A., Yuan, L., Chaney, K., and Daniilidis, K.
\newblock Ev-flownet: Self-supervised optical flow estimation for event-based
  cameras.
\newblock In \emph{Proceedings of Robotics: Science and Systems}, Pittsburgh,
  Pennsylvania, June 2018{\natexlab{a}}.
\newblock \doi{10.15607/RSS.2018.XIV.062}.

\bibitem[Zhu et~al.(2017)Zhu, Atanasov, and Daniilidis]{zhu2017event}
Zhu, A.~Z., Atanasov, N., and Daniilidis, K.
\newblock Event-based feature tracking with probabilistic data association.
\newblock In \emph{Robotics and Automation (ICRA), 2017 IEEE International
  Conference on}, pp.\  4465--4470. IEEE, 2017.

\bibitem[Zhu et~al.(2018{\natexlab{b}})Zhu, Yuan, Chaney, and
  Daniilidis]{zhu2018unsupervised}
Zhu, A.~Z., Yuan, L., Chaney, K., and Daniilidis, K.
\newblock Unsupervised event-based learning of optical flow, depth, and
  egomotion.
\newblock \emph{arXiv preprint arXiv:1812.08156}, 2018{\natexlab{b}}.

\end{thebibliography}
\bibliographystyle{icml2019}

\end{document}